\newtheorem{theorem}{Theorem} 
\newtheorem{lemma}[theorem]{Lemma}
\newtheorem{definition}[theorem]{Definition}
\def\figref#1{figure~\ref{#1}}
\def\Figref#1{Figure~\ref{#1}}
\def\tblref#1{table~\ref{#1}}
\def\secref#1{section~\ref{#1}}
\def\eqref#1{equation~\ref{#1}}
\def\1{\bm{1}}
\newcommand{\train}{\mathcal{D}}
\def\va{{\bm{a}}}
\def\vb{{\bm{b}}}
\def\vp{{\bm{p}}}
\def\vv{{\bm{v}}}
\def\vw{{\bm{w}}}
\def\vx{{\bm{x}}}
\def\vy{{\bm{y}}}
\def\vz{{\bm{z}}}
\def\evv{{v}}
\def\evw{{w}}
\def\mE{{\bm{E}}}
\def\mW{{\bm{W}}}
\DeclareMathAlphabet{\mathsfit}{\encodingdefault}{\sfdefault}{m}{sl}
\SetMathAlphabet{\mathsfit}{bold}{\encodingdefault}{\sfdefault}{bx}{n}
\def\gE{{\mathcal{E}}}
\def\sL{{\mathbb{L}}}
\def\sN{{\mathbb{N}}}
\def\sR{{\mathbb{R}}}
\def\sS{{\mathbb{S}}}
\definecolor{yaushian}{RGB}{243, 101, 66}
\definecolor{ruohong}{RGB}{101, 66, 243}
\definecolor{yiming}{RGB}{255, 0, 0}
\newcommand{\high}[1]{\textcolor{red}{#1}}
\newcommand{\ourmodel}{\textsc{DEPL }}
\newcommand{\ourmodelshort}{\textsc{DEPL}}
\title{Long-tailed Extreme Multi-label Text  Classification \\ with Generated Pseudo Label Descriptions}
\author{Ruohong Zhang \and Yau-Shian Wang \\
  \texttt{ruohongz,yaushiaw@andrew.cmu.edu} \\\And
  Yiming Yang \\
  \texttt{yiming@cs.cmu.edu} \\ \AND
  Donghan Yu \\
  \texttt{dyu2@cs.cmu.edu} \\ \And
  Tom Vu \\
  \texttt{tom.m.vu@gmail.com} \\\And 
  Likun Lei \\
  \texttt{llei@flexport.com}
  }
\begin{document}
\maketitle

\begin{abstract}

Extreme Multi-label Text Classification (XMTC) has been a tough challenge in machine learning research and applications due to the sheer sizes of the label spaces and the severe data scarce problem associated with the long tail of rare labels in highly skewed  distributions. This paper addresses the challenge of tail label prediction by proposing a novel approach, which combines the effectiveness of a trained bag-of-words (BoW) classifier in generating informative label descriptions under severe data scarce conditions, and the power of neural embedding based retrieval models in mapping input documents (as queries) to relevant label descriptions. The proposed approach achieves state-of-the-art performance on XMTC benchmark datasets and significantly outperforms the best methods so far in the tail label prediction. We also provide a theoretical analysis for relating the BoW and neural models w.r.t. performance lower bound.

\end{abstract}
\section{Introduction}
Extreme multi-label text classification (XMTC) is the task of tagging documents with relevant labels in a very large and often skewed candidate space. It has a wide range of applications, such as assigning subject topics to news or Wikipedia articles, tagging keywords for online shopping items, classifying industrial products for tax purposes, and so on. 

\begin{figure}[th!]
     \centering
     \includegraphics[width=\linewidth]{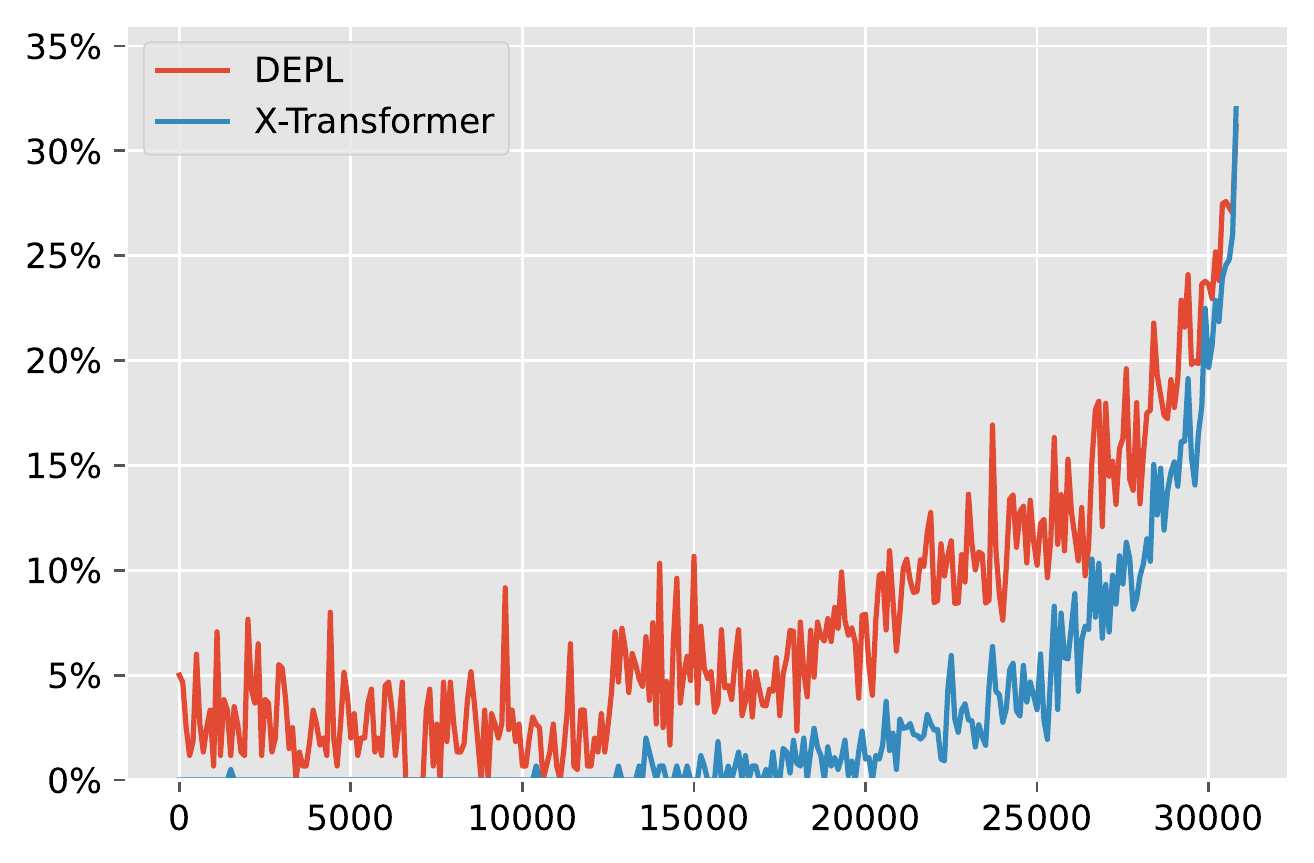}
     \caption{The classification performance of X-Transformer and DEPL (ours) on the Wiki10-31K dataset. The curves show the macro-averaged $F1@19$ scores of each system over the label bins (with 100 labels per bin).}
     \label{fig:data_dist}
\end{figure}

The most difficult part in solving the XMTC problem is to train classification models effectively for the rare labels in the long tail of highly skewed distributions, which suffers severely from the lack of sufficient training instances. 
Efforts addressing this challenge by the text classification community include hierarchical regularization methods for large margin classifiers \cite{gopal2013recursive}, Bayesian modeling of graphical or hierarchical dependencies among labels \cite{gopal2010multilabel, gopal2012bayesian}, clustering-based divide-and-conquer  strategies in resent neural classifiers \cite{chang2020taming, khandagale2019bonsai, prabhu2018parabel}, and so on. Despite the remarkable progresses made so far, and the problem is still very far from being well solved.  
\Figref{fig:data_dist} shows the performance of X-Transformer \cite{chang2020taming}, one of the 
state-of-the-art (SOTA) XMTC models with the best published result so far on the  Wiki10-31k benchmark dataset (with over 30,000 unique labels). The horizontal axis in this figure is the ranks of the labels sorted from rare to common, and the vertical axis is the text classification performance measured in macro-averaged $F1@19$ (higher the better) for binned labels (100 labels per bin).  The blue curve is the result of X-Transformer, which has the scores close to $0$ (worst possible score) for nearly half of the total labels.  
In other words, SOTA methods in XMTC still perform poorly in tail label prediction. We should point out that such poor performance of XMTC classifiers on tail labels has been largely overlooked in the recent literature, because benchmark evaluations of SOTA methods have typically used the metrics that are dominated by the system's performance on common labels, such as the micro-averaged precision at top-k (P@k). 
As a proper choice for evaluation of tail label prediction in \figref{fig:data_dist},  macro-averaged $F1@k$ is used, which gives the performance on each label an equal weight in the average. The red curve in the figure is the result of our new approach, being introduced the next.

In this paper, we seek solutions for tail label prediction from a new angle: we introduce a novel framework, namely the Dual Encoder with Pseudo Label (\ourmodelshort). It treats each input document as a query and uses a neural network model to retrieve relevant labels from the candidate space based on the textual descriptions of the labels.  The underlying assumption is, if the label descriptions are highly informative for text-based matching, then the retrieval system should be able to find relevant labels for input documents. Such a system would be particularly helpful for tail label prediction as the retrieval effectiveness does not necessarily rely on the availability of a large number of training instances, which what the tail labels are lacking. 

Now the key question is, how can we get a highly informative description for each label without human annotation?  In reality, class names are often available but they are typically one or two words, which cannot be sufficient for retrieval-based label prediction. Our answer is to use a relatively simple trained classifier, e.g., a linear support vector machine (SVM), to automatically generate an informative description for each label, which we call the pseudo description of the label.   The reason for us to choose a traditional classifier like linear SVM instead of a more modern neural model for label descriptions generation is that we want to better leverage the unsupervised statistics about word usage such as TF (term frequency within a document) and IDF (the inverse document frequencies within a document collection). Such unsupervised word features would be particularly helpful to alleviate the difficulty in classifier training under extreme data scarce conditions, and are easy and natural for traditional classifiers like SVM to leverage. The result of our approach (\ourmodelshort) is shown as the red curve in Figure 1, which significantly outperforms the blue curve of X-Transformer not only in the tail-label region but also in all other regions.  We also observed similar improvements by \ourmodel over strong baselines on other benchmark datasets (see \secref{sec:evaluation}).

Our main contributions can be summarized as the following:
\begin{enumerate}
\item We formulate the XMTC task as a neural retrieval problem, which enables us to alleviate the difficulty in tail label prediction by matching documents against label descriptions with advanced neural retrieval techniques. 
\item We enhance the retrieval system with pseudo label descriptions generated by a BoW classifier, which is proven to be highly effective for improving tail label prediction under severe data scarce conditions.
\item Our proposed method significantly and consistently outperforms strong baselines on multiple challenging benchmark datasets. Ablation tests provide in-depth analysis on different settings in our framework.
A theoretical analysis for relating the BoW and neural models w.r.t. performance lower bound is also provided.
\end{enumerate}

\section{Related Work}
\paragraph{BoW (or Sparse) Classifier}
Traditional BoW classifiers rely on the bag-of-words features such as one-hot vector with tf-idf weights, which capture the word importance in a document. Since the feature is high dimenstional and sparse, we call the BoW classifiers the \textit{sparse classifiers}. Early examples include the one-vs-all SVM models such as DiSMEC \cite{babbar2017dismec}, ProXML \cite{babbar2019proxml} and PPDSparse \cite{yen2017ppdsparse}.
Later methods leverage the tree structure of the label space for more effective or scalable learning, such as Parabel \cite{prabhu2018parabel} and Bonsai \cite{khandagale2019bonsai}. Since tf-idf features rely on surface-level word matching, sparse classifiers tend to miss the semantic matching among lexical variants or related concepts in different wording.


\paragraph{Neural (or Dense) Classifier} 
Neural models learn to capture the high level semantics of documents with dense feature embeddings. For this reason, we call them the \textit{dense classifiers}. The XML-CNN \cite{liu2017deep} and SLICE \cite{jain2019slice} employ the convolutional neural network on word embeddings for document representation. More recently, X-Transformer \cite{chang2020taming}, LightXML \cite{jiang2021lightxml} and APLC-XLNet \cite{ye2020pretrained} tames large pre-trained Transformer models to encode the input document into a fixed vector. AttentionXML \cite{you2018attentionxml} applies a label-word attention mechanism to calculate label-aware document embeddings, but it requires more computational cost proportional to the document length. In the above neural models, the feature extractor and the label embedding (randomly initialized) are jointly optimized via supervised signals. As we will show later, the document and the label embedding can be insufficiently optimized for the tail labels whose supervision signals are mostly negative.

\paragraph{Hybrid Approach:}
X-Transformer \citep{chang2020taming} complements neural model with sparse feature by concatenating tf-idf with the learned cluster-level neural embedding, which is an ensemble of the sparse and dense classifiers. Recent works in retrieval design unified systems to combine the sparse and dense features for better performance. SPARC~\cite{lee2019sparc} learns contextualized sparse feature indirectly via Transformer attention. 
COIL~\cite{gao2021coil} leverages lexical matching of contextualized BERT embeddings, and CLEAR~\cite{gao2004complementing} designs a residual-based loss function for the neural model to learn hard examples from a sparse retrieval model.
While we also combine the sparse feature with neural model, the classification setting does not assume predefined label description as in the retrieval setting.

\paragraph{Label Description:} 
When both the document text and label descriptions are available, the ranked-based multi-label classification is similar to the retrieval setting, where the dual encoder models~\cite{gao2021unsupervised, xiong2020approximate, luan2020sparse, karpukhin2020dense} have achieved SOTA performance in information retrieval on large benchmark datasets with millions of passages. The Siamese network \cite{dahiya2021siamesexml} for classification encodes both input documents and label descriptions under the assumption that high quality label descriptions are available. \citet{chai2020description} tries a generative model with reinforcement learning to produce extended label description with predefined label descriptions for initialization and uses cross attentions between input text document and output labels.  Although their ideas of utilizing label descriptions are attractive, the performance of those systems crucially depends on availability of predefined high-quality label descriptions, which is often difficult to obtain in real-world applications.  Instead, the realistic label descriptions are often short, noisy and insufficient for lexicon-matching based label prediction for input documents.  How to generate informative label descriptions without human efforts is thus an important problem, for which we offer an algorithmic solution in this paper.

\begin{figure}[t!]
     \centering
     \includegraphics[width=\linewidth]{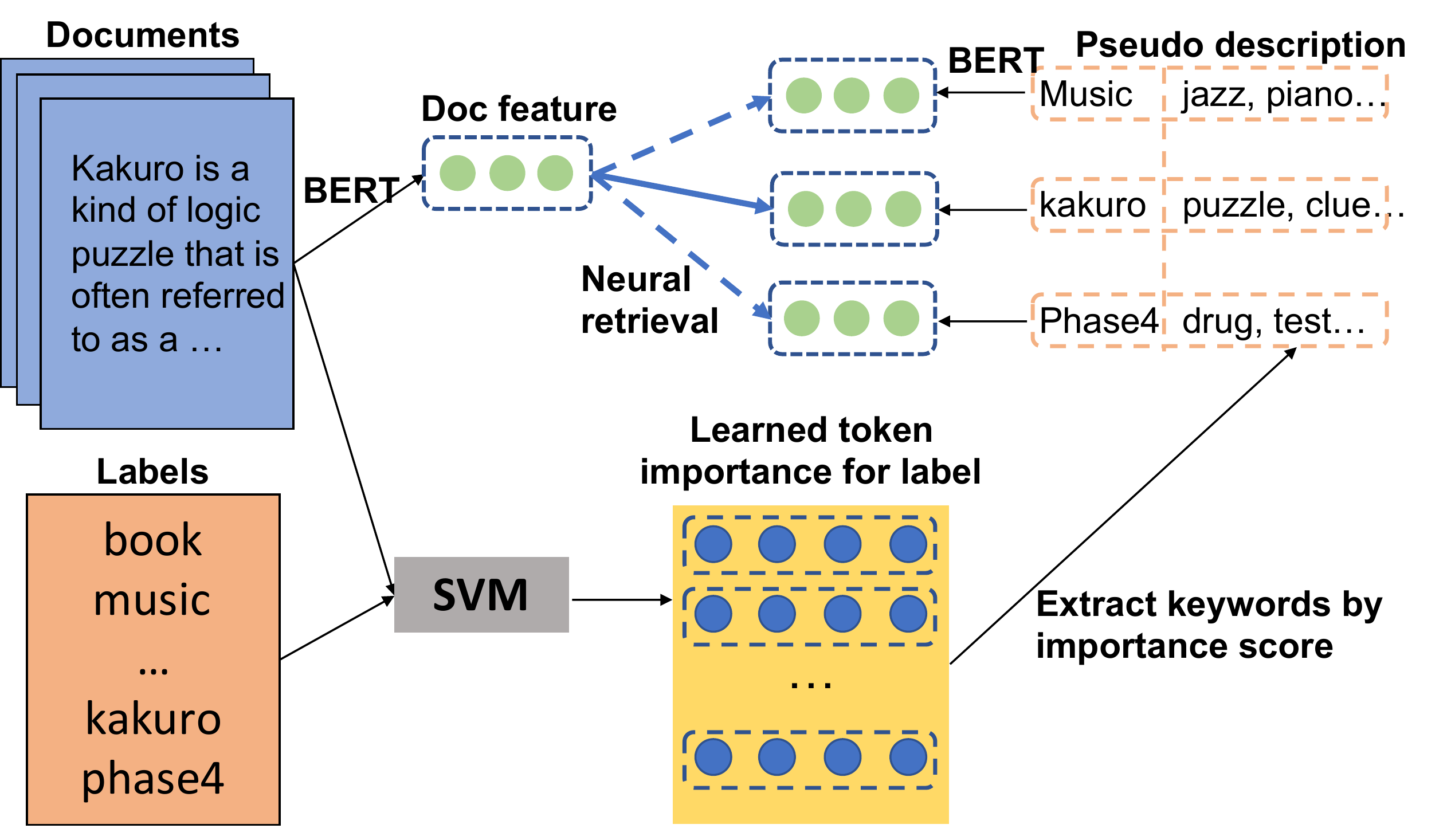}
     \caption{The proposed \ourmodel framework. First, we train a BoW classifier (SVM) and extract the top keywords from the label embeddings according to the learned token importance. Then, we concatenate the keywords with the original label names to form pseudo descriptions. Finally, we leverage the neural retrieval model to rank the labels according to semantic matching between document text and label descriptions.}
     \label{fig:model}
\end{figure}

\section{Proposed Method}
\label{sec:model}
In this section,we first provide the preliminaries on multi-label text classification system. Then we discuss the generation of pseudo label description obtained from a sparse classifier, the design of \ourmodel from the retrieval perspective, and the enhanced classification system with the retrieval module. 

\subsection{Preliminaries}
Let $\train=\{\left(\mathbf{x}_i, \mathbf{y}_i\right)_{i=1}^{N_{\text{train}}} \}$ be the training data where $\mathbf{x}_i$ is the input text and $\mathbf{y}_i\in\{0, 1\}^L$ are the binary ground truth labels of size $L$.
Given an instance $\vx$ and a label $l$, a classification system produces a matching score of the text and label:
\begin{equation*}
    f(\vx, l) = \langle \phi(\vx), \vw_l \rangle
\end{equation*}
where $\phi(\vx)$ represent the document feature vector and $\vw_l$ represents the label embedding of $l$. The dot product $\langle \cdot, \cdot \rangle$ is used as the similarity function.

Typically, the label embedding $\vw_l$ is randomly initialized and trained from the supervised signal. While learning the embedding as free parameters is expressive when data is abundant, it could be difficult to be optimized under the data scarce situation.
\paragraph{Sketch of Method} To tackle the long-tailed XMTC problem, we propose \ourmodelshort, a neural retrieval framework with generated pseudo label descriptions, as shown in \figref{fig:model}. Instead of learning the label embedding from scratch, the retrieval module directly leverages the semantic matching between the document and label text, providing a strong inductive bias on tail label prediction. 

Furthermore, we will show in later sections that a BoW classifier with the statistical feature gives better performance in data scarce situation. The pseudo labels generated from a BoW classifier exploit such data heuristics, and the neural model complements the statistical information with semantic meaning for further improvement. Next, we introduce the components of our system in details.

\subsection{Pseudo Label Description}
Short label names are usually given in the benchmark dataset, such as the category name of an Amazon product or the could tag of a Wikipedia article.
However, the provided label name is usually noisy and ambiguous, of which the precise meaning need to be inferred from the document text (refer to \secref{sec:evaluation} for more detailed discussions). Therefore, we enhance the quality of label names by an augmentation with keywords generated from a sparse classifier.
While there are multiple choices of the sparse classifier, we pick the linear SVM model with tf-idf feature $\phi_t(\vx)$ for a balance between efficiency and performance:
    $$ f_{\text{sparse}}(\vx, l) = \langle \phi_t(\vx), \vw^{\text{svm}}_l \rangle  $$
The label embedding weight $\vw^{\text{svm}}_l$ is optimized with the hinge loss:
\begin{equation*}
    \mathcal{L}_{\text{hinge}} = \frac{1}{LB}\sum_{i=1}^B\sum_{l=1}^L \max (0, 1- \tilde{y}_l \cdot f_{\text{sparse}}(\vx_i, l))
\end{equation*}
where $\tilde{y}_{l}=2 y_{l}-1 \in\{-1,1\}$ and $B$ is the batch size.

For a train SVM model, $\vw^{\text{svm}}_l$ has the dimension equal to the vocabulary size and each value $\evw^{\text{svm}}_{li}$ of the label embedding denotes the importance of the token $i$ w.r.t label $l$. We select the top $k$ most important tokens (ranked according to the importance score) as keywords, which are appended to the original label name to form the pseudo label description:
\begin{equation*}
    \operatorname{pseudo\_label}(l) = \operatorname{label\_name}(l) \oplus \operatorname{keywords}(l)
\end{equation*}
where $\oplus$ is the append operation. The token importance learned in $\vw^{\text{svm}}_l$ is purely based on the statistics of word frequency. After we extract the tokens as keywords, we can additionally leverage the semantic meaning of them with the powerful neural network representations, which will be introduced in the next section.




\subsection{Retrieval Model with Label Text}
In the long-tailed XMTC problem, the training instances of document and label pair for tail labels are limited, and thus it is difficult to optimize for neural model with a large number of parameters. Instead, we use a dual encoder model \cite{gao2021unsupervised, xiong2020approximate, luan2020sparse, karpukhin2020dense} to leverage the semantic matching of document and label text. We use the BERT \cite{devlin2018bert} model as our design choice as the contextualized function, which is shared for both the document and label text encoding. The similarity between them is measured by the dot product:
\begin{equation*}
    f_{\text{dual}}(\vx, l) = \langle \phi_{\text{doc}}(\vx), \phi_{\text{label}}(\operatorname{text}(l)) \rangle
\end{equation*}
where $\operatorname{text}(l)$ is the textual information of the label $l$. When the textual information only includes the label name given in the dataset, we call the model \textbf{DE-ret}. Otherwise, when the textual information is the pseudo label, we call the model \textbf{DEPL}.

The document embedding $\phi_{\text{doc}}(\vx)$ is obtained from the CLS embedding of the BERT model followed by a linear pooling layer:
\begin{equation*}
    \phi_{\text{doc}}(\vx) = \mW_{doc} \cdot \operatorname{BERT}(\vx, \text{CLS}) + \vb_{doc}
\end{equation*}
where $\operatorname{BERT}(\operatorname{text}(l), \text{CLS})$ represents the contextualized embedding of the special CLS token. $\mW_{doc}$ and $\vb_{doc}$ are the weights and biases for the document pooler layer.

For the label embedding $\phi_{\text{label}}(\operatorname{text}(l))$, we take an average of the last hidden layer of BERT followed by a linear pooler layer:
\begingroup
\small
\begin{align}
    &\phi_{\text{label}} (\operatorname{text}(l)) = \mW_{label} \cdot \psi_{bert}(\operatorname{text}(l)) + \vb_{label} \\
    &\psi_{bert}(\operatorname{text}(l)) = \frac{1}{|\operatorname{text}(l)|}\sum_{j=1}^{|\operatorname{text}(l)|} \operatorname{BERT}(\operatorname{text}(l), j) \label{eq:avg_bert}
\end{align}
\endgroup
where $\operatorname{BERT}(\operatorname{text}(l), j)$ represents the contextualized embedding of the $j$-th token in $\operatorname{text}(l)$ obtained from the last hidden layer of the BERT model. $\mW_{label}$ and $\vb_{label}$ are the weights and biases for the label pooler layer. In the \eqref{eq:avg_bert}, the average embedding of label tokens yields better performance empirically than the CLS embedding for two potential reasons: 1) the keywords ranked by importance are not natural language and the CLS embedding may not effectively aggregate such type of information, and 2) the CLS embedding captures the global semantic of longer context while the average of token preserves more of the shorter label text meaning.

\paragraph{Learning with Negative Sampling}
In order to optimize $f_{\text{dual}}(\vx, l)$, we need to calculate the label embedding $\phi_{\text{label}}(\operatorname{text}(l))$. Since calculating all the label embeddings for each batch is both expensive and prohibitive by the memory limit, we resort to negative sampling strategies for in-batch optimization. Specifically, we sample a fix-sized subset of labels for each batch containing: 1) all the positive labels of the instances in the batch, 2) the top negative predictions by the sparse classifier as the hard negatives, and 3) the rest of the batch is filled with uniformly random sampled negatives labels.

Let $\sS_b$ be the subset of labels sampled for a batch. The objective for the dual encoder is:
\begin{equation*}
\begin{split}
    \mathcal{L}_{\text{dual}} = -\frac{1}{B |\sS_b|}\sum_{i=1}^B \Bigg( \Bigg.
    \sum_{p \in \vy_i^{+}} \log \sigma(f_{\text{dual}}(\vx_i, p)) \\
    + \sum_{n \in \sS_b \backslash \vy_i^+} \log \sigma((1 - f_{\text{dual}}(\vx_i, n))) \Bigg. \Bigg)
\end{split}
\end{equation*}
where $B$ is the batch size, $\vy_i^+$ is the postive labels for instance $i$, and $\sigma$ is the sigmoid function.

\subsection{Enhance Classification with Retrieval}
In the neural classification system, the label embedding is treated as free parameters to be learned from supervised data, which is more expressive for medium and head labels with abundant training instances. The dense classifier learns the function:
\begin{equation}
    f_{\text{dense}}(\vx, l) = \langle \phi_{\text{doc}}(\vx), \vw_l^{\text{neural}}\rangle
\end{equation}
We propose to enhance the classification model with the retrieval mechanism by jointly fine-tuning:
\begin{equation}
    f_{\text{cls-dual}}(\vx, l) = \frac{\sigma(f_\text{dual}(\vx, l)) +  \sigma(f_{\text{dense}}(\vx, l))}{2}
\end{equation}
The classification and retrieval modules share the same BERT encoder. We refer to the system as \textbf{DEPL+c}. The object function $\mathcal{L}_{\text{cls-dual}}$ is similar to $\mathcal{L}_{\text{dual}}$ except for replacing $f_{\text{dual}}$ with $f_{\text{\text{cls-dual}}}$.

The \textbf{DEPL+c} model looks like an ensemble of the two systems at the first sight, but there are two major differences: 1) As the BERT encoder is shared between the classification and retrieval modules, it doesn't significantly increase the number of parameters as in \cite{chang2020taming, jiang2021lightxml}; and 2) when the two modules are optimized together, the system can take advantages of both units according to the situation of head or tail label predictions.


\section{Theoretical Analyses of \ourmodel}
\label{sec:analysis}
In this section, we first 
discuss the relation between dense and sparse classifier and then we derive a lower bound performance of \ourmodel over the sparse classifier.

\subsection{Rethinking Dense and Sparse XMTC}
We analyze the document and label embedding optimization in the skewed label distribution from the gradient perspective. Specifically, recall the predicted probability optimized by the binary cross entropy (BCE) loss: 
\begin{equation*}
    \mathcal{L}_{\text{BCE}} = -\sum_{l=1}^L y_l\log p_l + (1-y_l)\log (1-p_l)
\end{equation*}
The derivative of $\mathcal{L}_{\text{BCE}}$ w.r.t the logits is:
\begin{equation*}
    \frac{\partial \mathcal{L}_{\text{BCE}}}{\partial s_l} =
    \begin{cases}
        p_{l} - 1 & \text{if } y_{l}=1 \\
        p_{l} & \text{otherwise} \\
    \end{cases}
\end{equation*}

\paragraph{Document Feature Learning} In multi-label classification, the document feature needs to reflect all the representations of relevant labels. In fact, the gradient describes the relation between feature $\phi_n(\vx)$ and label embedding $\vw_l$. By the chain rule, the gradient of $\mathcal{L}_{\text{BCE}}$ w.r.t the document feature is:
\begin{equation*}
    \frac{\partial \mathcal{L}_{\text{BCE}}(y_{l}, p_{l})}{\partial \phi_n(\vx)} =
    \begin{cases}
        (p_{l} - 1)\vw_l & \text{if } y_{l}=1 \\
        p_{l}\vw_l & \text{otherwise} \\
    \end{cases}
\end{equation*}
By optimizing parameters $\theta$ of feature extractor, 
the document representation is encourage to move away from the negative label representation, that is:
$$\phi_n(\vx; \theta^\prime) \leftarrow \phi_n(\vx; \theta) - \eta p_l \vw_l$$
where $\eta$ is the learning rate.
Since a tail label appears more often as negative labels and $\theta$ is shared for all the data, the feature extractor is unlikely to encode tail label information, making tail labels more difficult to be predicted. In comparison, the sparse feature like tf-idf is unsupervised from corpus statics, which does not suffer from this problem. The feature may still maintain the representation power to separate the tail labels. 

\paragraph{Label Feature Learning}
When the labels are treated as indices in a classification system, they are randomly initialized and learned from supervised signals. The gradients of $\mathcal{L}_{\text{BCE}}$ w.r.t the label feature is:
\begin{equation*}
    \frac{\partial \mathcal{L}_{\text{BCE}}(y_{l}, p_{l})}{\partial \vw_l} =
    \begin{cases}
        (p_{l} - 1) \phi_n(\vx) & \text{if } y_{l}=1 \\
        p_{l} \phi_n(\vx) & \text{otherwise} \\
    \end{cases}
\end{equation*}
The label embedding is updated by:
\begin{align*}
    \vw_l^\prime &= \vw_l + \frac{\eta}{N_{\text{train}}}\sum_{i: y_{il} = 1} (1 - p_{il}) \phi_n(\vx_i) \\
    & -\frac{\eta}{N_{\text{train}}} \sum_{i: y_{il}=0} p_{il}\phi_n(\vx_i)
\end{align*}
As most of the instances are negative for a tail label, the update of tail label embedding is inundated with the aggregation of negative features, making it hard to encode distinctive feature reflecting its identity.  
Therefore, learning the tail label embedding from supervised signals alone can be very distracting. Although previous works leverage negative sampling to alleviate the problem~\cite{jiang2021lightxml, chang2020taming},
we argue that it is important to initialize the label embedding with the label side information.

\subsection{Analysis on Performance Lower Bound}
\label{sec:theory}
We will show that \ourmodel achieves a lower bound performance as the sparse classifier, given the selected keywords are important and the sparse classifier can separate the positive from the negative instances with non-trivial margin.

Let $\phi_t(\vx)$ be the normalized tf-idf feature vector of text with $\| \phi_t(\vx) \|_2 = 1$. The sparse label embeddings $\{ \vw_1, \ldots, \vw_L \}$ satisfies $\| \vw_l \|_2 \le 1, w_{li} > 0$. In fact, label embeddings can be transformed to satisfy the condition without affecting the prediction rank.  
Let $\vz_l$ be the top selected keywords from the sparse classifier, which is treated as the pseudo label. Define the sparse keyword embedding $\vv_l$ with $\evv_{li} = \evw_{li}$ if $i$ is an index of selected keywords and $0$ otherwise. 

In the following, we define the keyword importance and the classification error margin.
 \begin{definition} 
  For label $l$ and $\delta \ge 0$, the sparse keyword embedding $\vv_l$ is $\delta$-bounded if  $\langle \phi_t (\vx), \vv_l \rangle \ge \langle \phi_t(\vx), \vw_l  \rangle - \delta $. 
 \end{definition}
 
\begin{definition}
 For two labels $p$ and $n$, the error margin $\mu$ is the difference between the predicted scores $\mu(\phi (\vx), \vw_p, \vw_n) = \langle \phi (\vx), \vw_p - \vw_n \rangle$. 
\end{definition}
 
We state the main theorem below:
\begin{theorem}
\label{th:main}
Let $\phi_t(\vx)$ and $\phi_n(\vx)$ be the sparse and dense (dimension $d$) document feature, $\vw_l$ be the label embedding and  $\vz_l$ be the $\delta$-bounded keywords. For a positive label $p$, let $\sN_p = \{n_1, \ldots, n_{M_p} \}$ be a set of negative labels ranked lower than $p$. 
The error margin $\epsilon_i=\mu(\phi_t(\vx), \vw_p, \vw_{n_i})$ and $\epsilon = \min(\{ \epsilon_1, \ldots, \epsilon_{M_p} \})$. An error $\gE_i$ of the neural classifier occurs when 
\begin{equation}
    \mu(\phi_n(\vx), \phi_n(\vz_p), \phi_n(\vz_{n_i}) ) \le 0
\end{equation} 
The probability of any such error happening satisfies 
\begin{equation*}
    P(\gE_1 \cup \ldots \cup \gE_{M_p}) \le 4 {M_p} \exp (-\frac{(\epsilon - \delta)^2d}{50}) 
\end{equation*}
When $(\epsilon - \delta) \ge 10 \sqrt{\frac{\log M_p}{d}}$, the probability is bounded by $\frac{1}{M_p}$.
\end{theorem}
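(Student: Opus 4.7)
The plan is to reduce the neural classification error to a Johnson--Lindenstrauss (JL) style concentration inequality for random projections, after first transferring the sparse margin $\epsilon$ to the keyword-based sparse margin via $\delta$-boundedness.

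First, I would exploit $\delta$-boundedness to obtain a sparse margin in the keyword coordinates. Since $\vv_l$ agrees with $\vw_l$ on the selected keyword indices and is zero elsewhere, and since $\phi_t(\vx)$ and $\vw_l$ are coordinatewise nonnegative (tf-idf, and $w_{li}>0$ by assumption), we have $\langle \phi_t(\vx), \vv_l \rangle \le \langle \phi_t(\vx), \vw_l \rangle$, while by definition $\langle \phi_t(\vx), \vv_l \rangle \ge \langle \phi_t(\vx), \vw_l \rangle - \delta$. Subtracting the versions for $p$ and $n_i$ gives $\langle \phi_t(\vx), \vv_p - \vv_{n_i} \rangle \ge \epsilon_i - \delta \ge \epsilon - \delta$, reducing the task to showing the neural margin closely tracks this sparse keyword margin.

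Next, I would adopt a random-projection model of the shared neural encoder: $\phi_n(\vu) = R\cdot\operatorname{sparse}(\vu)$, with $R \in \mathbb{R}^{d\times V}$ having iid $\mathcal{N}(0,1/d)$ entries, so that $\phi_n(\vx) = R\,\phi_t(\vx)$ and $\phi_n(\vz_l) = R\vv_l$. Then $\mathbb{E}\,\langle R\phi_t(\vx), R\vv_l \rangle = \langle \phi_t(\vx), \vv_l\rangle$, so the neural margin is an unbiased estimate of the keyword-based sparse margin, and an error $\gE_i$ requires the neural estimate to deviate by at least $\epsilon-\delta$ from its mean.

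I would then apply a Hanson--Wright (or Bernstein) concentration inequality to the quadratic form $\phi_t(\vx)^\top R^\top R (\vv_p - \vv_{n_i})$. Using $\|\phi_t(\vx)\|_2 = 1$ and $\|\vv_p - \vv_{n_i}\|_2 \le 2$ (which follows from $\|\vw_l\|_2 \le 1$ and the coordinatewise inequality $|\vv_l| \le |\vw_l|$) yields a two-sided tail of the form $4 \exp(-(\epsilon-\delta)^2 d / 50)$ for the event $\gE_i$. A union bound over the $M_p$ negatives gives $P(\gE_1 \cup \cdots \cup \gE_{M_p}) \le 4 M_p \exp(-(\epsilon-\delta)^2 d/50)$; substituting $\epsilon-\delta \ge 10\sqrt{\log M_p/d}$ forces the exponent to $-2\log M_p$, producing a bound of order $1/M_p$ (up to the leading constant $4$).

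The main obstacle will be pinning down the constant $50$ in the exponent: the event of interest is a quadratic rather than linear function of the random matrix $R$, so the Hanson--Wright machinery must be invoked, and one has to carefully compute the Frobenius and operator norms of the associated coefficient matrix to match the stated constant under the normalizations $\|\phi_t\|_2 = 1$ and $\|\vw_l\|_2 \le 1$. Additional care is required because $\phi_n(\vx)$ and $\phi_n(\vz_l)$ share the same projection $R$, so the concentration is not a direct two-sample JL inner product but rather a single quadratic form in $R$, for which independence tricks do not directly apply.
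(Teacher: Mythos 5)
Your proposal follows essentially the same route as the paper: reduce the sparse margin to the keyword margin via $\delta$-boundedness (the paper's Lemma~\ref{lemma:keyword}), model the neural encoder as a random Gaussian projection of the sparse features so that both $\phi_n(\vx)$ and $\phi_n(\vz_l)$ share the same matrix, apply an inner-product concentration bound with $\|\phi_t(\vx)\|_2=1$ and $\|\vv_p-\vv_{n_i}\|_2\le 2$ to get $4\exp(-(\epsilon-\delta)^2 d/50)$ per negative, and finish with a union bound. The only difference is cosmetic: the paper invokes a ready-made Johnson--Lindenstrauss inner-product lemma of the form $P(|\langle \mE\va,\mE\vb\rangle-\langle\va,\vb\rangle|\ge \tfrac{\gamma}{2}(\|\va\|^2+\|\vb\|^2))\le 4\exp(-\gamma^2 d/8)$ with $\gamma=\tfrac{2}{5}(\epsilon-\delta)$ rather than rederiving it from Hanson--Wright, and your observation that the final bound is really $4/M_p$ rather than $1/M_p$ under the stated condition is a fair (and correct) refinement of the paper's last line.
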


\noindent \textbf{Discussion:} 
An error event occurs when the sparse model makes a correct prediction but the neural model doesn't. If the neural model avoids all such errors, the performance should be at least as good as the sparse model, and Theorem \ref{th:main} gives a bound of that probability.

The term $\delta$ measures the importance of selected keywords (smaller the more important), the error margin $\epsilon$ measures the difficulty the correctly predicted positive and negative pairs by the sparse model. The theorem states that the model achieves a lower bound performance as sparse classifier if the keywords are informative and error margin is non-trivial. Proofs and assumptions are in \secref{sec:proof} for interested readers.

\section{Evaluation Design}
\label{sec:experiment}


\subsection{Datasets}

The benchmark datasets for our experiments are
EURLex-4K~\citep{10.1007/978-3-540-87481-2_4}, Wiki10-31K~\citep{zubiaga2012enhancing} and AmazonCat-13K~\citep{10.1145/2507157.2507163}. The Wiki10-31K and AmazonCat-13K were obtained from the Extreme Classification Repository\footnote{\url{http://manikvarma.org/downloads/XC/XMLRepository.html}}. As for EURLex-4K, we obtained an unstemmed version from the APLC-XLNet github\footnote{\url{https://github.com/huiyegit/APLC_XLNet.git}}. Table~\ref{tab:dataset} summarizes the corpus statistics. 

\begin{table}[th!]
    \centering
    \small
    \caption{Corpus Statistics: $N_{\text{train}}$ and $N_{\text{test}}$ are the number of training and testing instances respectively; $\bar{L}_d$ is the average number of labels per document, and $L$ is the number of unique labels in total;  $|\sL_{\text{tail}}|$ is the number of tail labels with $1\sim 9$ positive training instances. \label{tab:dataset}}  
    \begin{adjustbox}{width=\columnwidth,center}
    \begin{tabular}{l|cccccc}
    \toprule
    Dataset & $N_{train}$ & $N_{test}$  & $\bar{L}_d$ & $L$ & $|\sL_{\text{tail}}|$ \\
    \midrule
    EURLex-4K & 15,539 &  3,809  & 5.30 & 3,956 & 2,413 \\
    Wiki10-31K & 14,146 & 6,616  & 18.64 &  30,938 & 26,545 \\
    AmazonCat-13K & 1,186,239 & 306,782  & 5.04 & 13,330 & 3,936 \\
    \bottomrule
    \end{tabular}
    \end{adjustbox}
\end{table}

For comparative evaluation of methods in tail label prediction, from each of the three corpora we extracted the subset of labels with $1 \sim 9$ positive training instances.  Those tail-label subsets include 
$63.48\%$, $88.65\%$ and $29.53\%$ of the total labels in the three corpora, respectively.  These numbers indicate that the label distributions are indeed highly skewed, with a heavy long tail in each corpus.

\subsection{Evaluation Metrics}
We use three metrics which are commonly used in XMTC evaluations,  namely, the micro-averaged $P@k)$, the PSP@k \cite{PSP, wei2021towards}, and the macro-averaged $F1@k$.

Given a ranked list of the predicted labels for each test document, the precision of the top-k labels is defined as: 
\begin{equation}
    P@k = \frac{1}{k} \sum_{i=1}^{k} \mathbbm{1}_{\vy_i^+}(p_i)
\end{equation}
where  $p_i$ is the $i$-th label in the list $\vp$ and $\mathbbm{1}_{\vy_i^+}$ is the indicator function. The average of the $P@k$ values across all the test-set documents is called \textbf{micro-averaged P@k}, which has been the most commonly used in XMTC evaluations. Since this metric gives an equal weight to the per-instance scores, the resulted average is dominated by the system's performance on the common (head) labels but not the tail labels.  
In other words, the performance comparison in this metric cannot provide enough insights to the effectiveness of methods in tail label prediction.

As an alternative metric, \textbf{PSP@k} ~\cite{PSP}  re-weights the precision on each instance as: \begin{align*}
    PSP @ k=\frac{1}{k} \sum_{l=1}^{k} \frac{\mathbbm{1}_{\mathbf{y}}(\mathbf{p}_l)}{\operatorname{prop}(\mathbf{p}_l)}
\end{align*}
where  the propensity score $\operatorname{prop}(\mathbf{p}_l)$ in the denominator gives higher weights to tail labels.

\textbf{Macro-averaged F1@k} \cite{yang1999re} is a metric that gives an equal weight to all the labels, including tail labels, head labels and any labels in the middle range. It is defined as the average of the label-specific $F1@k$ values, calculated based on a contingency table for each label, as shown in \tblref{tab:confusion_matrix}.  Specifically, the precision, recall and $F1$ for a predicted ranked list of length $k$ are computed as $\mathrm{P}= \frac{\mathrm{TP}}{\mathrm{TP} + \mathrm{FP}}, \mathrm{R}=\frac{ \mathrm{TP}}{ \mathrm{TP} + \mathrm{FN}}$, and $\operatorname{F1} = 2 \frac{P \cdot R }{P + R }$.
\begin{table}[ht]
    \centering
    \setlength{\tabcolsep}{4pt}
    \caption{Contingency table for label $l$.}
    \begin{adjustbox}{width=\columnwidth,center}
    \begin{tabular}{c|cc}
    & $l$ is true label & $l$ is not true label \\
    \hline
    $l$ is predicted & True Positive ($\mathrm{TP}_l$) & False Positive ($\mathrm{FP}_l$) \\
    $l$ is not predicted & False Negative ($\mathrm{FN}_l$) & True Negative ($\mathrm{TN}_l$) \\
    \end{tabular}
    \end{adjustbox}
    \label{tab:confusion_matrix}
\end{table}


For micro-averaged P@k and PSP@k, we choose $k=1,3,5$ as in previous works. For macro-averaged F1@k, we choose $k=19$ for Wiki10-31K because it has an average of $18.64$ labels and $k=5$ for the rest datasets.

\subsection{Baselines}
For the tail label evaluation, our method is compared with the SOTA deep learning models including X-Transformer~\cite{chang2020taming}, XLNet-APLC~\cite{ye2020pretrained}, LightXML~\cite{jiang2021lightxml}, and AttentionXML~\cite{you2018attentionxml}. X-Transformer, LightXML, and XLNet-APLC employ pre-trained Transformers for document representation. We reproduced the results of single model (given in their implementation) predictions with BERT as the base model for LightXML, BERT-large for X-Transformer, XLNet for XLNet-APLC, and LSTM for AttentionXML. The AttentionXML utilizes label-word attention to generate label-aware document embeddings, while the other models generate fixed document embedding. 

For the overall prediction of all labels, we also include the baselines of sparse classifiers: DisMEC~\cite{babbar2017dismec}, PfastreXML~\cite{jain2016extreme}, Parabel~\cite{prabhu2018parabel}, Bonsai~\cite{khandagale2019bonsai}, and we use the published results for comparison. We provide an implementation of linear SVM model with our extracted tf-idf features as another sparse baseline, and a BERT-base classifier as another dense classifier (used to initialize \ourmodelshort). 

\subsection{Implementation Details}
For the sparse model, since the public available BoW feature doesn't have a vocabulary dictionary, we generate the tf-idf feature by ourselves. We tokenize and lemmatize the raw text with the Spacy~\cite{spacy2} library and extract the tf-idf feature with the Sklearn~\cite{scikit-learn} library, with unigram whose df is $>= 2$ and $<= 70\%$ of the total documents. We use the BERT model as the contextualize function for our retrieval model, which is initialized with a pretrained dense classifier. Specifically, we fine-tune a $12$ layer BERT-base model with different learning rates for the BERT encoder, BERT pooler and the classifier. The learning rates are $(1e-5$, $1e-4$, $1e-3)$ for Wiki10-31K and $(5e-5$, $1e-4$, $2e-3)$ for the rest datasets. For the negative sampling, we sample batch of 500 instances for Wiki10-31K, and $300$ for EURLex-4K and AmazonCat-13K. We include $10$ hard negatives predicted by the SVM model for each instances. We used learning rate $1e-5$ for fine-tuning the BERT of our retrieval model and $1e-4$ for the pooler and label embeddings. For the pseudo label descriptions, we concatenate the provided label description with the generated the top $20$ keywords. The final length is truncated up to $32$ after BERT tokenization. We use length $16$ of pseudo label description as the default setting for \ourmodelshort.

\begin{table*}[ht!]
    \centering
    \caption{Tail label prediction results of methods in $P@k$ on the labels with $1\sim9$ positive training instances.} 
    \begin{adjustbox}{width=\linewidth,center}
    \begin{tabular}{c|ccc|ccc|ccc}
    \toprule
    \multicolumn{1}{c}{} & \multicolumn{3}{c}{EURLex-4K} &  \multicolumn{3}{c}{Wiki10-31K} & \multicolumn{3}{c}{AmazonCat-13K} \\
    \midrule
    Methods & PSP@1 & PSP@3 & PSP@5 & PSP@1 & PSP@3 & PSP@5 & PSP@1 & PSP@3 & PSP@5\\
    \midrule
    X-Transformer & 37.85 & 47.05 & 51.81  & 13.52 & 14.62 & 15.63 & 51.42 & 66.14 & 75.57 \\
    XLNet-APLC  & 42.21 & 49.83 & 52.88   & 14.43 & 15.38 & 16.47 & 52.55 & 65.11 & 71.36 \\
    LightXML & 40.54 & 47.56 & 50.50 & 14.09 & 14.87 & 15.52 & 50.70 & 63.14 & 70.13 \\
    AttentionXML & 44.20 & 50.85 & 53.87 & 14.49 & 15.65 & \bf 16.54 & 53.94 & 68.48 & 76.43 \\
    SVM  & 39.18 & 48.31 & 53.37 & 11.84 & 14.00 & 15.81 & 51.83 & 65.41 & 72.82 \\
    \midrule
    DEPL & \bf 45.60 & 52.28 & 53.52 & \bf 16.30 & \bf 16.26 & 16.27 & \bf 55.94 & \bf 70.01 & \bf 76.87 \\
    DEPL+c & 44.60 & \bf 52.74  & \bf 54.64 & 14.90 & 15.53 & 16.20 & 55.21 &  69.73 & 75.94 \\
    \bottomrule
    \end{tabular}
    \end{adjustbox}
    \label{tab:psp}
\end{table*}

\begin{figure*}[ht!]
    \centering
    \includegraphics[width=\linewidth]{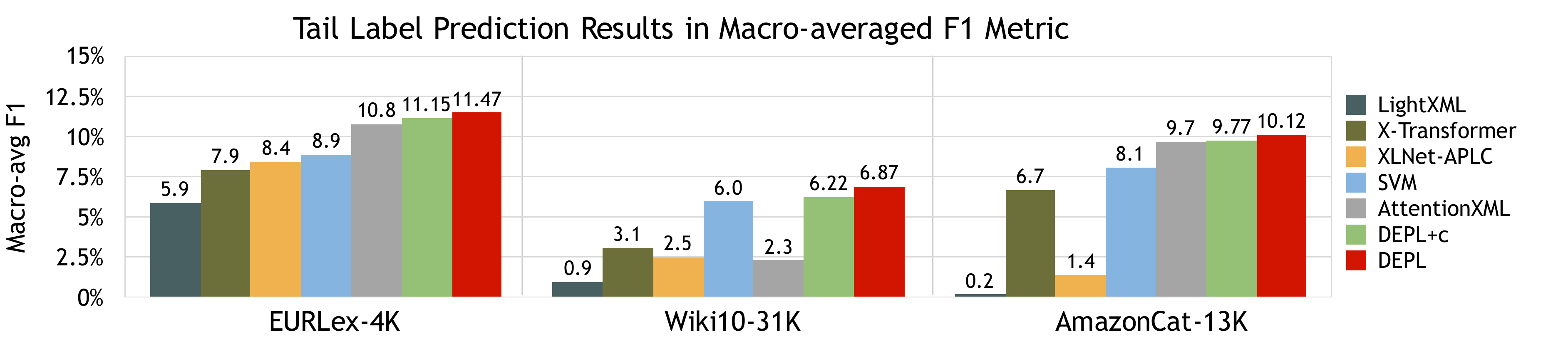}
    \caption{Tail-label prediction results in $F1@k$ on the labels with $1\sim9$ positive training instances, with $k=5$ for the EURLex-4K and AmazonCat-13K datasets, and $k=19$ for the Wiki10-31K dataset. } 
    \label{fig:tail_eval}
\end{figure*}

\section{Results}
\label{sec:evaluation}
Our experiments investigate the effectiveness of our model on the tail label prediction as well as its performance on the overall prediction. 

\subsection{Results in Tail Label Prediction} 
\paragraph{PSP Metric} The evaluation with the PSP metric is shown in \tblref{tab:psp}. We compare our model with the SOTA deep learning methods. Our proposed models \ourmodel and \ourmodelshort+c perform the best on the three benchmark datasets.

Since the Wiki10-31K dataset has more labels and the distribution is more skewed, the observation on the results is very interesting. On the one hand, the most frequent label covers more than $85\%$ of the training instances, so it is easier to predict that label than other lower frequencies labels. However, ranking the high frequency labels at the top doesn't give much gain on the PSP metric. 
On the other hand, each instance has an average of $19$ positive labels and ranking the subset of tail labels at the top could boost the PSP score. Since \ourmodel relies on the semantic matching between the document and label text, it is less affected by the dominating training pairs, and thus the PSP@1, PSP@3 beats the SOTA models by a larger margin. The \ourmodelshort+c achieves worse performance on this dataset, because the classification counterpart of the model would benefit more on the head label predictions and tend to rank the head labels at the top. 

For the EURLex-4K and AmazonCat-13K, we don't observe to much difference in \ourmodel and \ourmodel+c performance, and even \ourmodel+c can perform better on the EURLex-4K dataset. This maybe due to the improvement of head label predictions which can also boost the overall score in the micro-based metric.

\paragraph{Macro-F1 Metric} Although the PSP metric gives higher weight to the tail labels, it is a micro-averaged metric over the scores of each instance, which can still be affected by the high frequency labels that cover most of the instances. In comparison, the Macro-averaged F1 metric gives each label an equal weight and reflects more of the label level performance. We compare the Macro F1 metric averaged on the tail labels and the results are shown in \figref{fig:tail_eval}.

An interesting observation is that the SVM baseline achieves competitive results on the tail label predictions. We observe that SVM model can outperform all the pre-trained Transformer-based models on the tail label prediction, and outperform the AttentionXML on the Wiki10-31K dataset. We guess the reason is that the SVM utilizes the unsupervised statistical feature as document representation, which potentially suffers less from the data scarcity issue. The empirical result also serves as an evidence that the joint optimization of feature extractor and label embedding is difficult when data is limited. Among the deep learning baselines, the AttentionXML method performs the best on the tail label predictions, beating the SVM on $2$ out of the $3$ benchmark datasets. The reason could be that it utilizes the label to document word attention which is a local feature matching that benefits the tail label prediction.

Our proposed models perform the best on the Macro F1 metric with the \ourmodel model showing the best performance on all the $3$ benchmark datasets. We attribute the success of model to the retrieval module that focuses on the semantic matching between the document and label text. While the results evaluated in the Macro-averaged F1 metric show similar trend to that in the PSP metric, we do get some different conclusions with different metrics, i.e. the SVM model doesn't stand out under the PSP metric. Since the F1 metric is calculated specifically on the set of tail labels, it provides a more fine-grained result of tail label prediction. In comparison, the PSP metric also reflects the performance on the more common categories.

\begin{table*}[ht!]
    \centering
    \caption{The all-label prediction results of representative classification systems evaluated in the micro-avg P@k metric. The bold phase and underscore highlight the best and second best model performance. } 
    \begin{tabular}{cc|ccc|ccc|ccc}
    \toprule
    \multicolumn{2}{c}{} & \multicolumn{3}{c}{EURLex-4K} &  \multicolumn{3}{c}{Wiki10-31K} & \multicolumn{3}{c}{AmazonCat-13K} \\
    \midrule
    & Methods & P@1 & P@3 & P@5 & P@1 & P@3 & P@5 & P@1 & P@3 & P@5\\
    \midrule
    \multirow{4}{*}{published results} & DisMEC  &  83.21 & 70.39 & 58.73 & 84.13 & 74.72 & 65.94 & 93.81 & 79.08 & 64.06 \\
    & PfastreXML & 73.14 & 60.16 & 50.54 &  83.57 & 68.61 & 59.10 & 91.75 & 77.97 & 63.68 \\
    & Parabel & 82.12 & 68.91 & 57.89 & 84.19 & 72.46 & 63.37 & 93.02 & 79.14 & 64.51 \\
    & Bonsai & 82.30 & 69.55 & 58.35 &  84.52 & 73.76 & 64.69 &  92.98 & 79.13 & 64.46 \\
    \midrule
     \multirow{6}{*}{replicated results} & AttentionXML & 85.12 & 72.80 & 61.01  & 86.46 & \underline{77.22} & 67.98  & 95.53 & 82.03 & 67.00 \\
    & X-Transformer & 85.46 & 72.87 & 60.79 & 87.12 & 76.51 & 66.69 & \underline{95.75} & \bf 82.46 & \underline{67.22} \\
    & XLNet-APLC & \bf 86.83 & \bf 74.34 & \underline{61.94} & \bf 88.99 & \bf 78.79 & \bf 69.79 & 94.56 & 79.78 & 64.59\\
    & LightXML & 86.12 & \underline{73.87} & 61.67 & 87.39 & 77.02 & \underline{68.21} & 94.61 & 79.83 & 64.45\\
    & SVM  & 83.44 & 70.62 & 59.08 & 84.61 & 74.64 & 65.89 & 93.20 & 78.89 & 64.14 \\
    & BERT   & 84.72 & 71.66 & 59.12 & 87.60 & 76.74 & 67.03 & 94.26 & 79.63 & 64.39 \\
    \midrule
     \multirow{2}{*}{our model results} & DEPL & 85.38 &  71.86 & 59.91 & 84.54 & 73.44 & 64.75 & 94.86 & 80.85 & 64.55 \\
    & DEPL+c & \underline{86.43} & 73.77 & \bf 62.19 & 87.32 & 77.05 & 67.39 & \bf 96.16 & \underline{82.23} & \bf 67.65 \\
    \bottomrule
    \end{tabular}
    \label{tab:micro_all}
\end{table*}

\begin{table*}[th!]
    \small
    \centering
    \caption{Examples of SVM generated keywords for tail labels from Wiki10-31K. The classifier is trained with only one positive training instance per label. For each label, we show the top $20$ extracted keywords. We manually highlights the meaningful keywords in red. \label{tab:description}}
    \begin{tabular}{cc|c}
    \toprule
    \bf Label Text & \bf \#training instance & \bf Top Keywords \\
    \midrule
    phase4  & 1 & \makecell{\high{trials} \high{clinical} protection personal directive processed data trial \high{drug} phase eu \\ processing \high{patients} sponsor \high{controller} legislation regulation art investigator \high{study} } \\
    \hline 
    ensemble & 1 &\makecell{ \high{boosting} kurtz ferrell weak \high{algorithms} \high{learners} \high{misclassified learner} kearns \high{ensemble} \\ charges bioterrorism indictment doj indict cae correlated 2004 \high{reweighted} boost } \\
    \hline 
    kakuro & 1 & \makecell{ \high{nikoli} kakuro \high{puzzles crossword clues entries entry values sums cells} \\ cross \high{digits} dell \high{solvers} racehorse guineas aa3aa digit clue kaji } \\
    \bottomrule
    \end{tabular}
\end{table*}

\subsection{Results in All-label Prediction}
The performance of model evaluated on the micro-averaged P@k metric is reported in \tblref{tab:micro_all}. Our model is compared against the SOTA sparse and dense classifiers. The results on the EURLex-4K and AmazonCat-13K shows that \ourmodelshort+c achieves the best or second best performance compared to the all the SOTA models. On the Wiki10-31K dataset, our model performs worse than the other neural baseline, because the retrieval-based models have lower score evaluated on the P@k metric.

By comparing the overall performance with the tail label performance, we uncover a trade-off between the head label and tail label prediction. While our retrieval-based models achieve the best performance on the tail label evaluation on the Wiki10-31K dataset, it is worse under the micro-averaged P@5 metric, which could be dominated by the head labels. We want to emphasis that over $26,545$ ($88.65\%$) labels in the Wiki10-31K dataset belong to the tail labels with less than $10$ training instances, constituting a majority of the label space. The overall classification precision (P@k) only reflects a part of the success of a classification system, and the tail label evaluation is yet another part.

From the evaluation, we observe that the \ourmodelshort+c outperforms the retrieval-based counterpart \ourmodel and the BERT model as the baseline classifier. This shows that enhancing the classification system with the retrieval model improves the overall performance. The reason is that the dense classifier learns better representation when the training instances are abundant, while the retrieval system is better at matching the semantic of document and label text. Each of the modules captures a certain aspect of the data heuristic for text classification and a combination of them by sharing the BERT encoder yields better performance.

The sparse classifiers generally underperform the neural models and are comparable to our implement of SVM. We observe that \ourmodel can outperform the sparse models, even if the pseudo labels are extracted from the SVM classifier. The reason is that neural retrieval model can additionally leverage the keyword semantic information and correlation of them, which is ignored in the SVM classifier.

\begin{figure}[th!]
     \centering
     \includegraphics[width=\linewidth]{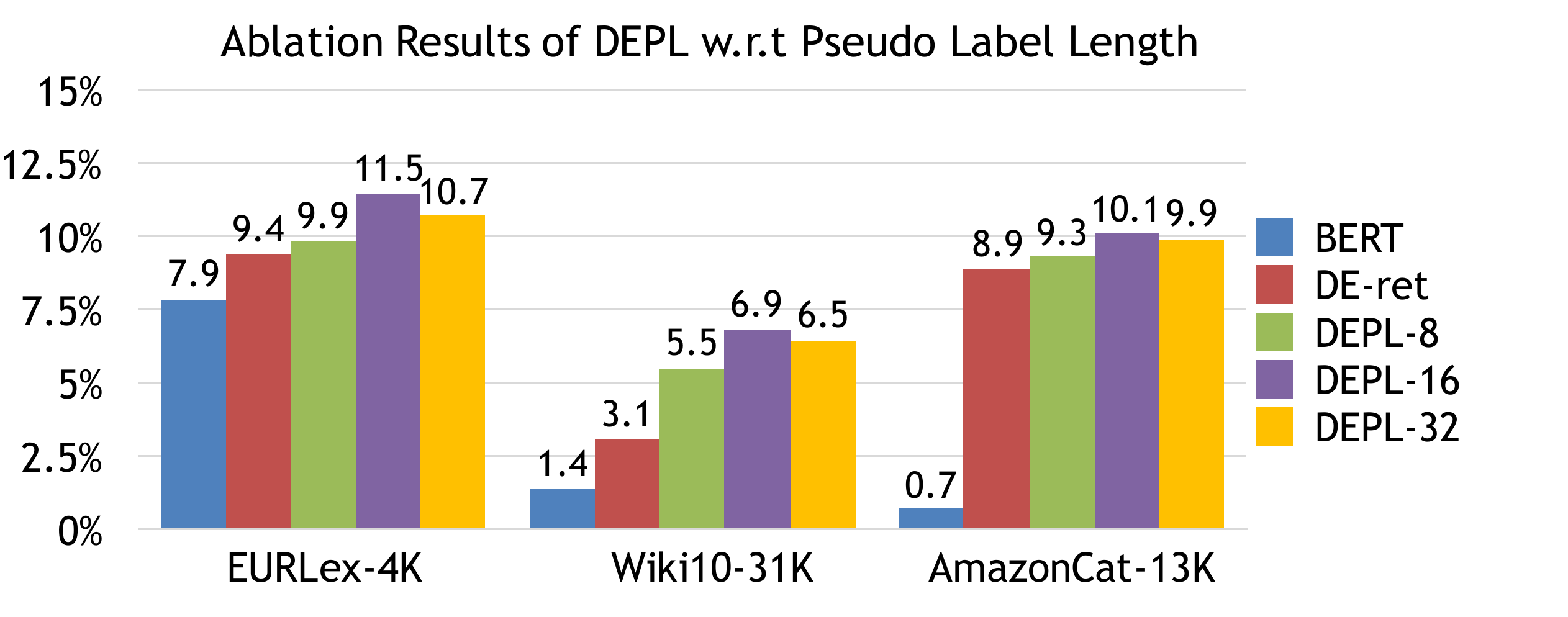}
     \caption{The ablation-test results of \ourmodel in Macro-averaged F1@k metric with varying length of pseudo label descriptions. }
     \label{fig:tail_ablation}
\end{figure}

\begin{table}[ht!]
    \centering
    \small
    \setlength{\tabcolsep}{3.5pt}
    \caption{Ablation-test results of \ourmodel under different training conditions.\label{tab:ablation}}
    \begin{adjustbox}{width=\columnwidth,center}
    \begin{tabular}{l|ccc|ccc}
    \toprule
    Methods & P@1 & P@3 & P@5 & PSP@1 & PSP@3 & PSP@5 \\
    \midrule
    \multicolumn{7}{c}{EUR-Lex}   \\
    \midrule
    DE-ret & -1.34 & -1.18 & -1.16 & -3.2 & -2.11 &  +0.18 \\
    w/o pre-train & -6.81 & -6.72 & -6.16 & -6.87 & -7.09 & -5.87  \\
    w/o neg & -2.52 & -2.63 & -2.2 & -3.59 & -3.66 & -1.9 \\
    5 hard negative &  -1.55 & -1.19 & -1.12 & -3.57 & -2.98 & -1.32  \\
    \midrule
    \multicolumn{7}{c}{Wiki10-31K}   \\
    \midrule
    DE-ret & -3.31 &  -7.35 & -10.74 & -5.43 & -4.43 & -4.55 \\
    w/o pre-train & -4.72 &  -8.3 & -10.9  & -1.56 & -1.36 & -1.27 \\
    w/o hard negative & -1.91 &  -5.54 & -3.73 & -0.25 & -0.53 & -0.72 \\
    5 hard negative  &  -0.71 & -2.77 & -2.96 & -0.32 & -0.35 & -0.27  \\
    \bottomrule
    \end{tabular}
    \end{adjustbox}
\end{table}

\subsection{Ablation Study}
\paragraph{Pseudo Label Description} 
In \figref{fig:tail_ablation}, we conduct an ablation test on the length of the pseudo label and the performance is measured by Macro-avg F1@k. The BERT classifier is included as a baseline with no label text information (length equals $0$). As we observe that the longer description of length $16$ performs the better, but when length is $32$, the performance doesn't increase as the text may become noisy with more unrelated keywords.

The DE-ret model is a pure retrieval baseline (avg length $3$) with only the label name. While it achieves good performance on the EURLex-4K and AmazonCat-13K datasets, it still performs poorly on the Wiki10-31K dataset. Another evidence is shown in \tblref{tab:ablation}, where the performance of DE-ret drops more significantly on the Wiki10-31K dataset. That shows that generating the keywords from the sparse classifier can enhance the text quality. Furthermore, the generated text allows \ourmodel to use the semantic information of the label keywords, which is ignored in the SVM model. This could be another reason why our model performs better than the SVM baseline on the Wiki10-31K dataset.

In \tblref{tab:description}, we pick the illustrative examples of the SVM generated keywords trained on the Wiki10-31K dataset for labels with only $1$ training example. We manually highlight the meaningful terms related to the label meaning. For example, the label name \textit{phase4} is ambiguous, whose meaning needs to be inferred from the corresponding document. From the keywords \textit{trial, clinical, drug ...}, we can understand it is about medical testing phase and the keywords can enhance the label semantic. In another example, \textit{kakuro} is a Japanese logic puzzle known as a mathematical crossword and the game play involves in adding number in the cells. Generating a description for \textit{kakuro} requires the background knowledge, but the keywords automatically learned from the sparse classifier provide the key concepts. Although not all the keywords can provide rich semantics to complement the original label name, they may serve as a context for the label to make it more distinguishable from others. 

\paragraph{Model Pre-training} We fine-tune our retrieval model on a pre-trained neural classifier (BERT) and \tblref{tab:ablation} shows that without using the pre-trained model, there is a significant drop in the precision and PSP metrics.
\paragraph{Negative Sampling}
We used the top negative predictions by the SVM model as the choice of hard negative labels. By default, we use $10$ hard negatives for each instance in the batch. In \tblref{tab:ablation}, we observe a performance drop when no hard negatives or only $5$ hard negatives are used for training.

         
    

\section{Conclusion}
In this paper, we propose a novel neural retrieval framework (DEPL) for the open challenge of tail-label prediction in XMTC.  By formulating the problem as to  capture the semantic mapping between input documents and system-enhanced label descriptions, DEPL combines the strengths of neural embedding based retrieval and the effectiveness of a large-marge BoW classifier in generating informative label descriptions under severe data sparse conditions.
Our extensive experiments on very large benchmark datasets show significant performance improvements by DEPL over strong baseline methods, especially in tail label description. 


\appendix
\onecolumn
\section{Appendix}
We include the assumptions and proofs of Theorem \ref{th:main}.
\label{sec:proof}

\paragraph{Assumptions}
Similar to \citet{luan2020sparse}, we treat neural embedding as fixed dense vector $\mE \in \mathbb{R}^{d\times v}$ with each entry sampled from a random Gaussian $N(0, d^{-1/2})$. $\phi_n(\vx) = \mE\phi_t(\vx)$ is weighted average of word embeddings by the sparse vector representation of text. According to the \textit{Johnson-Lindenstrauss (JL) Lemma}~\citep{johnson1984extensions, ben2002limitations}, even if the entries of $\mE$ are sampled from a random normal distribution, with large probability, $\langle \phi_t(\vx), \vv \rangle$ and $\langle \mE\phi_t(\vx), \mE\vv \rangle$ are close.



%

\begin{lemma} 
\label{lemma:keyword}
Let $\vv$ be the $\delta$-bounded keyword-selected label embedding of $\vw$. For two labels $p, n$, the error margins satisfy:
$$
   | \mu( \phi_t(\vx), \vw_p, \vw_n ) 
   - \mu( \phi_t(\vx), \vv_p, \vv_n ) | \le \delta
$$
\end{lemma}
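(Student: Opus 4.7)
The plan is to expand both error margins by linearity of the inner product, reducing the claim to a statement about a difference of two per-label ``information-loss'' quantities, and then bound each such quantity on both sides using the $\delta$-boundedness hypothesis together with the non-negativity structure baked into the construction of $\vv_l$.

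Concretely, the first step is the identity
\begin{equation*}
\mu(\phi_t(\vx), \vw_p, \vw_n) - \mu(\phi_t(\vx), \vv_p, \vv_n)
= \langle \phi_t(\vx), \vw_p - \vv_p\rangle - \langle \phi_t(\vx), \vw_n - \vv_n\rangle
= a_p - a_n,
\end{equation*}
where I set $a_l := \langle \phi_t(\vx), \vw_l\rangle - \langle \phi_t(\vx), \vv_l\rangle$. The $\delta$-boundedness definition in Section~\ref{sec:theory} translates directly to $a_l \le \delta$ for every $l$, so the upper half of the needed two-sided control is immediate.

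The second step is to show $a_l \ge 0$ for every $l$. This uses two structural facts from the setup of Section~\ref{sec:theory}: the sparse label weights satisfy $\evw_{li} > 0$ and $\vv_l$ is obtained from $\vw_l$ by zeroing out all coordinates except the indices of the selected keywords. Since tf-idf features are entrywise non-negative, replacing a non-negative coordinate of $\vw_l$ with zero can only decrease the inner product with $\phi_t(\vx)$; therefore $\langle \phi_t(\vx), \vv_l\rangle \le \langle \phi_t(\vx), \vw_l\rangle$, i.e.\ $a_l \ge 0$. Combining the two sides gives $a_p, a_n \in [0,\delta]$, from which $|a_p - a_n| \le \delta$ follows immediately.

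The main subtlety I anticipate is not a computational one but an expository one: the $\delta$-bounded definition is stated as a one-sided inequality, so a naive application of the triangle inequality on the expanded form would give the weaker bound $2\delta$. The tighter bound $\delta$ claimed here really does depend on pairing the one-sided definition with the non-negativity of both $\phi_t(\vx)$ and the zeroed-out entries of $\vw_l$, and the cleanest proof spells out this sign argument explicitly rather than invoking $|a_p - a_n| \le |a_p| + |a_n|$.
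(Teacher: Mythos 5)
Your proof is correct and follows essentially the same route as the paper's: both reduce the claim to showing $\langle \phi_t(\vx), \vw_l \rangle - \langle \phi_t(\vx), \vv_l \rangle \in [0,\delta]$ for each label and then combine the two labels with opposite signs. The one genuine improvement in your write-up is that you explicitly justify the upper inequality $\langle \phi_t(\vx), \vv_l \rangle \le \langle \phi_t(\vx), \vw_l \rangle$ from the non-negativity of the tf-idf features and of the zeroed-out entries of $\vw_l$, whereas the paper asserts it as part of ``the definition of $\delta$-bounded keywords'' even though the definition is only one-sided.
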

\begin{proof}
By the definition of $\delta$-bounded keywords,
\begin{align}
    & \langle \phi_t(\vx), \vw_p \rangle - \delta \le \langle \phi_t(\vx), \vv_p \rangle \le \langle \phi_t(\vx), \vw_p \rangle \label{eq:l1_1} \\ 
    & - \langle \phi_t(\vx), \vw_n \rangle \le -\langle \phi_t(\vx), \vv_n \rangle \le - \langle \phi_t(\vx), \vw_n \rangle + \delta \label{eq:l1_2}
\end{align}
Adding \eqref{eq:l1_1} and \eqref{eq:l1_2} finishes the proof:
\begin{equation}
   \langle \phi_t(\vx), \vw_p - \vw_n \rangle - \delta
   \le \langle \phi_t(\vx), \vv_p - \vv_n \rangle 
   \le  \langle \phi_t(\vx), \vw_p - \vw_n \rangle + \delta
\end{equation}
\end{proof}

\begin{lemma}
\label{lemma:main}
Let $\phi_t(\vx)$ and $\phi_n(\vx)$ be the sparse and dense (dimension $d$) document feature, $\vw_l$ be the label embedding and  $\vz_l$ be the $\delta$-bounded keywords. Let $p$ be a positive label and $n$ be a negative label ranked below $p$ be the sparse classifier. The error margin is $\epsilon=\mu(\phi_t(\vx), \vw_p, \vw_n)$. An error $\gE$ of neural classification occurs when $\mu(\phi_n(\vx), \phi_n(\vz_p), \phi_n(\vz_n) ) \le 0$. The probability $P(\gE) \le 4 \exp (-\frac{(\epsilon - \delta)^2d}{50})$.
\end{lemma}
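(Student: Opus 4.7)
My plan is to combine Lemma~\ref{lemma:keyword} (which absorbs the error from replacing the full sparse label embedding by its keyword projection) with a Johnson--Lindenstrauss-type inner-product concentration bound (which handles the randomness of the neural embedding matrix $\mE$). Under the stated assumptions, $\phi_n(\vx)=\mE\phi_t(\vx)$ and the neural embedding of the pseudo-label satisfies $\phi_n(\vz_l)=\mE\vv_l$ where $\vv_l$ is the sparse keyword embedding. Consequently the dense error margin can be rewritten as $\mu(\phi_n(\vx),\phi_n(\vz_p),\phi_n(\vz_n))=\langle \mE\phi_t(\vx),\mE(\vv_p-\vv_n)\rangle$, reducing the problem to showing this random inner product stays positive with high probability.

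First I would apply Lemma~\ref{lemma:keyword} to the pair $(\vw_p,\vw_n)$, yielding $\langle\phi_t(\vx),\vv_p-\vv_n\rangle\ge\epsilon-\delta$. This is precisely the mean of the random inner product $\langle \mE\phi_t(\vx),\mE(\vv_p-\vv_n)\rangle$ over the randomness of $\mE$. Next I would invoke the standard inner-product JL bound: for fixed vectors $u,v$ with bounded norm and $\mE$ having i.i.d.\ $N(0,d^{-1/2})$ entries, $|\langle \mE u,\mE v\rangle-\langle u,v\rangle|\le t$ holds with probability at least $1-4\exp(-ct^2d)$ for an appropriate universal constant $c$. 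The cleanest derivation uses the polarization identity $\langle \mE u,\mE v\rangle=\tfrac{1}{4}(\|\mE(u+v)\|^2-\|\mE(u-v)\|^2)$, applies the JL norm-preservation inequality to both $u+v$ and $u-v$, and finishes with a union bound; this is precisely where the factor $4$ in the target bound comes from.

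Finally, the error event $\gE$ requires $\langle \mE\phi_t(\vx),\mE(\vv_p-\vv_n)\rangle\le 0$, which forces the deviation from the mean (at least $\epsilon-\delta$) to be of magnitude at least $\epsilon-\delta$. Setting $t=\epsilon-\delta$ in the concentration inequality, together with the norm bounds $\|\phi_t(\vx)\|_2=1$, $\|\vv_l\|_2\le1$ (hence $\|\vv_p\pm\vv_n\|_2\le 2$), gives $P(\gE)\le 4\exp\!\bigl(-d(\epsilon-\delta)^2/50\bigr)$.

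The main obstacle is pinning down the universal constant in the exponent: different formulations of the JL inequality (via sub-exponential tails of chi-squared variables, Hanson--Wright, or moment-generating-function arguments) give different constants, and propagating the polarization identity together with the norm constraints $\|\vv_p\pm\vv_n\|\le 2$ is what inflates the usual JL constant to $50$. The rest of the argument is essentially a clean decomposition of the dense-classifier margin error into a deterministic keyword-selection gap (handled by Lemma~\ref{lemma:keyword}) and a random-projection gap (handled by JL), so the real work lies in tracking constants carefully through the tail bound rather than in any conceptual step.
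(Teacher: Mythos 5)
Your proposal is correct and follows essentially the same route as the paper: identify $\phi_n(\vx)=\mE\phi_t(\vx)$ and $\phi_n(\vz_l)=\mE\vv_l$, use Lemma~\ref{lemma:keyword} to pass from the $\vw$-margin $\epsilon$ to the $\vv$-margin $\epsilon-\delta$, and apply the Johnson--Lindenstrauss inner-product concentration with $\|\phi_t(\vx)\|_2=1$ and $\|\vv_p-\vv_n\|_2\le 2$ (the paper takes $\gamma=\tfrac{2}{5}(\epsilon-\delta)$ in the bound $4\exp(-\gamma^2 d/8)$, which yields exactly the constant $50$ and the factor $4$ you trace to the polarization/union-bound step). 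No gaps.
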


\begin{proof}
By the JL Lemma \citep{ ben2002limitations}: 
For any two vectors $\va, \vb \in \sR^{v}$, let $\mE \in \sR^{d \times v}$ be a random matrix such that the entries are sampled from a random Gaussian. Then for every constant $\gamma > 0$:
\begin{equation}
 P\left(|\langle \mE \va, \mE \vb \rangle - \langle \va, \vb \rangle| \geq \frac{\gamma}{2}\left(\| \va \|^{2}+\|\vb\|^{2}\right)\right) 
\leq  4 \exp \left(-\frac{\gamma^2d}{8}\right)
\end{equation}
Let $\gamma = \frac{2}{5}(\epsilon - \delta)$, $\va = \phi_t(\vx)$ and $\vb = \vv_p - \vv_n$. Since $\| \va \|_2 = 1 $ and $\| \vb \|_2 \le (\|\vv_p \|_2 + \|\vv_n \|_2)^2 \le 4$, the JL Lemma gives 
\begin{align}
    &P\left(
    |\langle \mE \phi_t(\vx), \mE (\vv_p - \vv_n) \rangle - \langle \phi_t(\vx), \vv_p - \vv_n \rangle| \geq \epsilon - \delta  \right) \label{eq:jl} \\
    &\le 4 \exp (-\frac{(\epsilon - \delta)^2d}{50})
\end{align}
To complete the proof, we need to show $P(\gE) \le Eq. \ref{eq:jl}$: 
\begin{alignat}{3}
\gE
&\implies \quad    & | \langle \mE \phi_t(\vx), \mE (\vv_p - \vv_n) \rangle - \langle \phi_t(\vx), \vw_p - \vw_n \rangle | &\ge \epsilon \\
&\implies \quad    &| \langle \mE \phi_t(\vx), \mE (\vv_p - \vv_n) \rangle - \langle \phi_t(\vx), \vv_p - \vv_n \rangle | &\ge \epsilon - \delta \label{eq:l2_2}
\end{alignat}
where the \eqref{eq:l2_2} is derived by Lemma \ref{lemma:keyword}:
\begin{align}
&| \langle \mE \phi_t(\vx), \mE (\vv_p - \vv_n) \rangle - \langle \phi_t(\vx), \vv_p - \vv_n \rangle | \\
\ge &| \langle \mE \phi_t(\vx), \mE (\vv_p - \vv_n) \rangle 
- \langle \phi_t(\vx), \vw_p - \vw_n \rangle |
- \\
&|\langle \phi_t(\vx), \vw_p - \vw_n \rangle 
- \langle \phi_t(\vx), \vv_p - \vv_n \rangle | \\
\ge &\epsilon - \delta
\end{align}
Therefore $P(\gE) \le Eq. \ref{eq:jl}$, which completes the proof.
\end{proof}

\noindent Proof of \textbf{Theorem} \ref{th:main}
    
\begin{proof}
The Lemma 2 shows that 
\begin{equation}
    P(\gE_i ) \le 4 \exp (-\frac{(\epsilon_i - \delta)^2d}{50}) \le 4 \exp (-\frac{(\epsilon - \delta)^2d}{50})
\end{equation}
By an union bound on the error events $\{ \gE_1, \gE_2, \ldots, \gE_{M_p} \}$, 
\begin{align}
     P(\gE_1 \cup \ldots \cup \gE_{M_p}) &\le \sum_{i=1}^{M_p} 4\exp(-\frac{(\epsilon_i - \delta)^2d}{50}) \\
     &= 4 {M_p} \exp (-\frac{(\epsilon - \delta)^2d}{50}) 
\end{align}
\end{proof}
When $(\epsilon - \delta)^2 \ge 10 \sqrt{\frac{\log {M_p}}{d}}$, we have $\exp (-\frac{(\epsilon - \delta)^2d}{50}) \le \frac{1}{4{M_p}^2}$ and therefore $P(\gE_1 \cup \ldots \cup \gE_{M_p}) \le \frac{1}{{M_p}}$.

\end{document}